\documentclass{article}

\usepackage{arxiv}

\usepackage[utf8]{inputenc} 
\usepackage[T1]{fontenc}    
\usepackage{hyperref}       
\usepackage{url}            
\usepackage{booktabs}       
\usepackage{amsfonts}       
\usepackage{nicefrac}       
\usepackage{microtype}      
\usepackage{lipsum}		
\usepackage{graphicx}
\usepackage{natbib}
\usepackage{doi}

\usepackage{amsthm, amsmath,amssymb,algorithm2e}

\usepackage{longtable}

\usepackage{booktabs}
\usepackage{siunitx} 
\usepackage{multirow}

\usepackage{mathtools}


\newtheorem{lem}{Lemma}[section]

\AfterEndEnvironment{lem}{\noindent\ignorespaces}
\AfterEndEnvironment{defi}{\noindent\ignorespaces}
\AfterEndEnvironment{prop}{\noindent\ignorespaces}


\newcommand{\argmin}{\operatornamewithlimits{argmin}}

\newcommand{\with}{\,  | \,}
\renewcommand{\vec}[1]{#1}
\newcommand{\fromto}{\longrightarrow}
\newtheorem{theorem}{Theorem}

\title{Conformal Prediction with Partially Labeled Data}

\author{
	Alireza Javanmardi\thanks{Corresponding author. \texttt{alireza.javanmardi@ifi.lmu.de}} \qquad Yusuf Sale \qquad Paul Hofman\qquad Eyke H\"ullermeier\\
	Institute of Informatics, LMU Munich, Germany\\
	Munich Center for Machine Learning (MCML), Germany\\ 
}





\hypersetup{
pdftitle={A template for the arxiv style},
pdfsubject={q-bio.NC, q-bio.QM},
pdfauthor={David S.~Hippocampus, Elias D.~Striatum},
pdfkeywords={First keyword, Second keyword, More},
}

\begin{document}
\maketitle
\begin{abstract}
While the predictions produced by conformal prediction are set-valued, the data used for training and calibration is supposed to be precise. In the setting of superset learning or learning from partial labels, a variant of weakly supervised learning, it is exactly the other way around: training data is possibly imprecise (set-valued), but the model induced from this data yields precise predictions. In this paper, we combine the two settings by making conformal prediction amenable to set-valued training data. We propose a generalization of the conformal prediction procedure that can be applied to set-valued training and calibration data. We prove the validity of the proposed method and present experimental studies in which it compares favorably to natural baselines.

\end{abstract}

\keywords{Conformal Prediction \and Superset Learning \and Partial Label Learning \and Imprecise Data}

\section{Introduction}
\label{sec:intro}
Conformal prediction (CP), a prominent uncertainty quantification technique, has drawn increasing attention in statistics and machine learning over the past decade. With its roots in classical frequentist statistics, this framework enables the construction of reliable prediction sets without the need for any distributional assumptions \citep{vovk2005algorithmic}. A key advantage of conformal prediction lies in its validity guarantees for the constructed prediction sets, which cover the true outcomes with high probability. This makes it appealing for applications in safety-critical domains, such as risk assessment in finance \citep{gammerman2007hedging}, medical diagnosis and disease prediction \citep{papadopoulos2009reliable}, drug discovery and toxicity prediction \citep{svensson2018conformal}, among many others.

Another machine learning (ML) setting dealing with set-valued data is partial label learning (PLL), a specific type of weakly supervised learning \citep{grandvalet2002logistic,jin2002multiple,nguyen2008classification,cour2011partial}. In a sense, PLL is orthogonal to conformal prediction: While the predictions produced by CP are set-valued, the data used for training and calibration is supposed to be precise. In PLL, it is exactly the other way around: Although the training data might be imprecise (set-valued), the goal is to induce a unique model producing precise (point) predictions. This may strike as odd, as one may argue that if the training data is imprecise or ambiguous, it might be all the more important to reflect this imprecision or ambiguity in the induced model and the predictions produced by this model.  
For example, one may allow for a set of incomparable, undominated models, resulting, for instance, from the interval order induced by set-valued loss functions~\citep{couso2016machine}, or by the application of conservative, imprecise Bayesian updating rules~\citep{zaffalon2009conservative}.

As an alternative, we suggest the use of CP to capture (predictive) uncertainty in the setting of PLL. In other words, we propose to combine PLL and CP within a single framework. To this end, we propose a generalization of the CP procedure that can be applied to set-valued training and calibration data. For this approach, we establish theoretical validity guarantees. Moreover, through experimental studies, we showcase the enhanced accuracy of our method in weakly supervised learning settings compared to natural baselines. 

\section{Background}

\subsection{Partial Label Learning}
\label{sec:pll}

As already said, partial label learning (PLL) is a specific type of learning from weak supervision, in which the outcome (response) associated with a training instance is only characterized in terms of a subset of possible candidates. Thus, PLL is somehow in-between supervised and semi-supervised learning, with the latter being a special case. Motivated by practical applications in which only partial information about outcomes is available, PLL has been studied under various names, such as \emph{learning from ambiguously labeled examples} \citep{hullermeier2006ambiguous}, and under slightly different assumptions on the incomplete information being provided. Often, the only assumption made is that the set of candidates covers the actual (precise) outcome, which is also reflected by the name \emph{superset learning} \citep{liu2012conditional}.

More formally, consider the standard setting of supervised learning with a data space $\mathcal{Z} = \mathcal{X} \times \mathcal{Y}$, where $\mathcal{X}$ is the instance space and $\mathcal{Y}$ is the output space, respectively. As our focus is on the multi-class classification scenario, we refer to the output variable as ``label'' and assume $\mathcal{Y}$ to be finite (typically of small to moderate size). 
The learning task normally consists of choosing an optimal model (hypothesis) $h^*$ from a given model space (hypothesis space) $\mathcal{H}$, based on a set of training data
\begin{equation}\label{eq:pdata}
\mathcal{D} = \big\{ \, (\vec{x}_i , y_i ) \, \big\}_{i=1}^n \, \in \, (\mathcal{X}\times\mathcal{Y})^n \enspace .
\end{equation}
More specifically, optimality typically refers to prediction accuracy, i.e., a model is sought whose expected prediction loss or \emph{risk} 
\begin{equation}\label{eq:risk}
\mathcal{R}(h) \, = \, \mathbb{E}_{(\vec{x},y) \sim P} \, L \big( y , h(\vec{x}) \big) \, = \,  
\int L \big( y , h(\vec{x}) \big) \, d \, P(\vec{x}, y) 
\end{equation}
is minimal; here, $L: \, \mathcal{Y} \times \mathcal{Y} \fromto \mathbb{R}$ is a loss function, and $P$ is an (unknown) probability measure on $\mathcal{X} \times \mathcal{Y}$ modeling the underlying data generating process. 

In PLL, the learning algorithm does not have direct access to the data (\ref{eq:pdata}) because the labels $y_i \in \mathcal{Y}$ are not observed precisely. Instead, only supersets $S_i \subseteq \mathcal{Y}$ are observed so that the training data consists of (imprecise, coarse, ambiguous) observations
\begin{equation}\label{eq:idata}
\mathcal{O} = \big\{ (\vec{x}_i , S_i) \big\}_{i=1}^n \, \in \, (\mathcal{X}\times 2^\mathcal{Y})^n \enspace .
\end{equation}
There are various ways of learning from data of that kind, notably the idea of generalizing the principle of empirical risk minimization through the use of a generalized loss function. For example, \cite{hullermeier2014disambiguation} introduces the \emph{optimistic superset loss} as an extension of the loss $L$ in (\ref{eq:risk}): 
\begin{equation}\label{eq:app1}
L_O(S, \hat{y}) =  \min \big\{ L(y, \hat{y}) \with y \in S \big\}  \enspace .
\end{equation}
Learning is then accomplished by finding a model minimizing this loss (or maybe a regularized version thereof) on the training data:
\begin{equation}\label{eq:app2}
h^* \in 
\argmin_{h \in \mathcal{H}} \, \frac{1}{n} \sum_{i=1}^n 
L_O\big( S_i , h (\vec{x}_i) \big) \enspace .
\end{equation}
A key motivation of this approach is the idea of \emph{data disambiguation}, i.e., the idea of simultaneously inducing the true model and reconstructing the values of the underlying precise data. The same type of loss function has more recently been introduced under the notion of \emph{infimum loss} \citep{cabannnes2020structured}.

Obviously, depending on the underlying loss function $L$, the optimization problem (\ref{eq:app2}) may become complex, especially since (\ref{eq:app1}) could be non-convex. From a theoretical perspective, an important question concerns conditions under which successful learning (for example, in the sense of convergence toward a truly optimal model) is actually possible, despite the imprecision of the data. An analysis of this kind obviously requires assumptions about the process of ``imprecisiation'', i.e., the way in which precise outcomes are turned into imprecise observations. The first positive results, showing that successful learning is possible under specific assumptions, have been obtained by \cite{liu2014superset,cabannnes2020structured,cabannnes2021disambiguation}. 
\subsection{Conformal Prediction}
\label{sec:cp}

Suppose a training dataset (\ref{eq:pdata}) to be given, and denote by $(x_{new},y_{new}) \in \mathcal{Z}$ a new test point. 
Assuming that $x_{new}$ is observed, but $y_{new}$ is not, CP aims to construct a \emph{prediction set} of the form $\mathcal{T}(x_{new}) \subseteq \mathcal{Y}$ that is valid in the sense that $y_{new} \in \mathcal{T}(x_{new})$ with high probability. Informally speaking, the idea of CP is to test the hypothesis $y_{new} = y$ for all $y \in \mathcal{Y}$ and to exclude from the prediction set only those outcomes $y$ for which this hypothesis can be rejected at the predefined level of confidence. Hypothesis testing is done in a nonparametric way: Consider any \emph{nonconformity function} that assigns scores $\alpha(\vec{x}, y)$ to input/output tuples; the latter can be interpreted as a measure of ``strangeness'' of the pair $(\vec{x}, y)$, i.e., the higher the score, the less the data point $(\vec{x}, y)$ conforms to what one would expect to observe. Assuming \emph{exchangeability} of the data $\mathcal{D}$, CP then finds a critical value $q$ for the degree of nonconformity so that those $y$ with $\alpha(\vec{x}, y) > q$ are excluded. Theoretically, the CP procedure is able to guarantee \emph{marginal coverage}, meaning that, in an infinite sequence of predictions, the miscoverage rate (fraction of predictions $\mathcal{T}(x_{new})$ not covering $y_{new}$) does not exceed a prespecified value $\epsilon > 0$.

This guarantee holds true regardless of how the nonconformity function is defined. Yet, this function has a strong influence on the \emph{efficiency} of predictions, i.e., the (average) size of the prediction sets. A common approach, which we will also assume in the following, is to train a probabilistic predictor $\hat{f}$ so that $\hat{f}(x)$ is a prediction of the conditional probability $p( \cdot \, | \, x)$ on $\mathcal{Y}$. Nonconformity scores are then naturally defined in terms of reciprocals of class probabilities, i.e., $\alpha(x,y) = 1 - \hat{f}(x)_{y}$. 

CP has originally been developed in a transductive setting, which, however, comes with major computational challenges (e.g., one would need to retrain the predictor $\hat{f}$ after each new observation). 
Later on, inductive variants of CP have also been developed \citep{papadopoulos2002inductive, papadopoulos2002qualified}.
To construct prediction sets using inductive conformal prediction (ICP), the first step is to partition the training data $\mathcal{D}$ into two subsets, the \emph{proper training set} $\mathcal{D}_\text{train}$
and the \emph{calibration set} $\mathcal{D}_\text{calib}$:
\begin{align*}
\mathcal{D}_\text{train} & = \lbrace (x_i, y_i):i\in \mathcal{I}_1 \rbrace \\
\mathcal{D}_\text{calib} & = \lbrace (x_i, y_i): i\in \mathcal{I}_2 \rbrace
\end{align*}
Then, a multi-class classification algorithm $\mathcal{A}$ is used to fit a (probabilistic) predictor to the proper training set:
\begin{align}\label{eq:classifier:precise}
    \hat{f}(\cdot) \leftarrow \mathcal{A} (\mathcal{D}_\text{train})
\end{align}
The next step is called calibration, which involves computing the nonconformity score of each calibration data instance that determines how well it conforms to the established classifier $\hat{f}$. As already said, a natural choice for the nonconformity is one minus the predicted probability of the ground-truth class, giving rise to a score set    
\begin{align}\label{eq:nonconformitySet:precise}
    \mathcal{E} := \left\lbrace \, 1 - \hat{f}(x_j)_{y_j}: j \in \mathcal{I}_2 \, \right\rbrace \, . 
\end{align}
For any set of nonconformity scores $\mathcal{E}$, define the critical score $q(\mathcal{E}, \epsilon)$ in terms of its $\lceil (1+|\mathcal{E}|)(1-\epsilon)\rceil$ smallest value, or equivalently, its $|\mathcal{E}|^{-1} \lceil (1+|\mathcal{E}|)(1-\epsilon)\rceil$ empirical quantile. Furthermore, given an instance $x \in \mathcal{X}$, a classifier $\hat{f} : \mathcal{X} \fromto \mathbb{P}(\mathcal{Y})$, a set of nonconformity scores $\mathcal{E}$, and an error rate $\epsilon$, define the prediction set $\mathcal{T}(x, \hat{f}, \mathcal{E}, \epsilon)$ as
\begin{align}\label{eq:prediction_set:general}
    \mathcal{T}(x, \hat{f}, \mathcal{E}, \epsilon) := \left\lbrace y\in\mathcal{Y}: \hat{f}(x)_{y} \geq 1 -  q(\mathcal{E}, \epsilon) \right\rbrace  \, .
\end{align}
ICP outputs $\mathcal{T}(x_{new}, \hat{f}, \mathcal{E}, \epsilon)$ as the prediction set $\hat{Y}_{new}$ for a new test point $x_{new}$, thereby satisfying the marginal coverage property (in expectation) if samples in $\mathcal{D}_\text{calib}\cup \{(x_{new}, y_{new})\}$ are drawn exchangeably from a joint probability distribution over the data space.

\section{Conformal Prediction with Partially Labeled Data}

Coming back to the idea of combining conformal prediction with partial label learning, we are now again interested in the case of set-valued training data (\ref{eq:idata}), where each data instance $x_i$ is associated with a set of potential labels $S_i \subseteq \mathcal{Y}$.
We keep denoting the ground-truth label of instance $x_i$ by $y_i$ and assume it lies in its candidate set, i.e., $y_i \in S_i$. 

Applying ICP to such data, we again start by partitioning the data $\mathcal{O}$ into proper training $\mathcal{O}_\text{train} = \lbrace (x_i, S_i):i\in \mathcal{I}_1 \rbrace$ and calibration subsets $\mathcal{O}_\text{calib} = \lbrace (x_i, S_i):i\in \mathcal{I}_2 \rbrace$. As mentioned in Section \ref{sec:pll}, the task of learning from partially labeled data has been well-studied in the literature, making the training step of ICP feasible for such data. All we need to do next is replace the algorithm $\mathcal{A}$ in \eqref{eq:classifier:precise} with a partial label learning algorithm $\mathcal{A}_\text{PLL}$ that allows us to fit a classifier on $\mathcal{O}_\text{train}$:
\begin{align}\label{eq:classifier:imprecise}
    \hat{f}_\text{PLL} (\cdot) \leftarrow \mathcal{A}_\text{PLL} (\mathcal{O}_\text{train})
\end{align}
Like before, we assume that the induced classifier $\hat{f}_\text{PLL} (\cdot)$ predicts probability distributions over the classes. However, each calibration instance $x_j$ is now associated with (possibly) multiple labels $S_j$. This begs the general question of how to compute the nonconformity scores for set-valued data $(x_i , S_i)$. 

A relative straightforward approach is to consider each candidate label $y \in S_j$ separately, compute its nonconformity score $1 - \hat{f}_\text{PLL} (x_j)_{y}$, and then pessimistically pick the maximum one:
\begin{align}\label{eq:nonconformitySet:imprecise:max}
    \mathcal{E}_\text{max} := \left\lbrace 1 - \min_{y \in S_j}\hat{f}_\text{PLL} (x_j)_{y}: j \in \mathcal{I}_2 \right\rbrace. 
\end{align}
For a new test point $x_{new}$ and any $\epsilon \in (0,1]$, the prediction set is then given by 
\begin{equation}\label{eq:predmax}
\hat{Y}_{new} = \mathcal{T}(x_{new}, \hat{f}_\text{PLL}, \mathcal{E}_\text{max}, \epsilon) \, .
\end{equation}
Let $\mathcal{O}_\text{calib}' = \lbrace (x_i, y_i) :  i\in \mathcal{I}_2 \rbrace$ be the precise counterpart of $\mathcal{O}_\text{calib}$, i.e., the underlying precise data that the PLL learner could not observe. Moreover,  define 
$$
\mathcal{E}_1 := \left\lbrace 1 - \hat{f}_\text{PLL}(x_j)_{y_j}: j \in \mathcal{I}_2 \right\rbrace  \, . 
$$
The following theorem establishes the validity of the prediction sets made by this approach. 
\begin{theorem}\label{theorem:max}
      If the data points in $\mathcal{O}_\text{calib}' \cup (x_{new}, y_{new})$ are exchangeable, then the prediction set (\ref{eq:predmax}) with underlying score set (\ref{eq:nonconformitySet:imprecise:max}) satisfies 
\begin{align*}
    \mathbb{P}\Big(y_{new} \in \mathcal{T}(x_{new}, \hat{f}_\text{PLL}, \mathcal{E}_\text{max}, \epsilon)\Big) \geq 1-\epsilon \, .
\end{align*}
\end{theorem}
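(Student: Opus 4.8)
The plan is to reduce the claim to the standard inductive conformal prediction (ICP) guarantee recalled at the end of Section~\ref{sec:cp}, applied to the \emph{unobserved} precise calibration set $\mathcal{O}_\text{calib}'$ together with the score set $\mathcal{E}_1$, and then to transfer the resulting coverage bound to the pessimistic prediction set via a purely deterministic set-inclusion argument. The reason the argument must be routed through $\mathcal{E}_1$ is that $\mathcal{E}_\text{max}$ depends on the candidate sets $S_j$ and hence is not a function of the precise data alone, so exchangeability cannot be invoked for it directly.

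First I would condition on $\mathcal{O}_\text{train}$, so that $\hat{f}_\text{PLL}$ is a fixed function and $(x,y)\mapsto 1-\hat{f}_\text{PLL}(x)_y$ is a legitimate fixed nonconformity measure. Since the points in $\mathcal{O}_\text{calib}'\cup\{(x_{new},y_{new})\}$ are exchangeable and $\mathcal{E}_1$ is exactly the calibration score set (\ref{eq:nonconformitySet:precise}) associated with this data and this classifier, the base ICP result yields directly
$\mathbb{P}\big(y_{new}\in\mathcal{T}(x_{new},\hat{f}_\text{PLL},\mathcal{E}_1,\epsilon)\big)\ge 1-\epsilon.$

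Next comes the key comparison. For every calibration index $j\in\mathcal{I}_2$, the assumption $y_j\in S_j$ gives $\hat{f}_\text{PLL}(x_j)_{y_j}\ge\min_{y\in S_j}\hat{f}_\text{PLL}(x_j)_y$, hence $1-\hat{f}_\text{PLL}(x_j)_{y_j}\le 1-\min_{y\in S_j}\hat{f}_\text{PLL}(x_j)_y$; that is, $\mathcal{E}_1$ is dominated by $\mathcal{E}_\text{max}$ entrywise under the natural bijection indexed by $\mathcal{I}_2$. A one-line order-statistics lemma (if $a_j\le b_j$ for all $j$ then the $k$-th smallest of $a$ is $\le$ the $k$-th smallest of $b$) then gives $q(\mathcal{E}_1,\epsilon)\le q(\mathcal{E}_\text{max},\epsilon)$, using that $|\mathcal{E}_1|=|\mathcal{E}_\text{max}|=|\mathcal{I}_2|$ so both quantiles use the same index $\lceil(1+|\mathcal{I}_2|)(1-\epsilon)\rceil$. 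Therefore $1-q(\mathcal{E}_\text{max},\epsilon)\le 1-q(\mathcal{E}_1,\epsilon)$, and by the definition (\ref{eq:prediction_set:general}) of the prediction set, $\mathcal{T}(x_{new},\hat{f}_\text{PLL},\mathcal{E}_1,\epsilon)\subseteq\mathcal{T}(x_{new},\hat{f}_\text{PLL},\mathcal{E}_\text{max},\epsilon)$ for every realization of the data.

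Finally I would combine the two facts: the inclusion holds deterministically, so $\mathbb{P}\big(y_{new}\in\mathcal{T}(x_{new},\hat{f}_\text{PLL},\mathcal{E}_\text{max},\epsilon)\big)\ge\mathbb{P}\big(y_{new}\in\mathcal{T}(x_{new},\hat{f}_\text{PLL},\mathcal{E}_1,\epsilon)\big)\ge 1-\epsilon$, which is the assertion. There is no deep obstacle here; the only points requiring care are the order-statistics monotonicity step and the bookkeeping that the two score sets have equal cardinality (so the quantile indices coincide), together with the observation that the domination is exactly what the superset assumption $y_j\in S_j$ buys. Everything else is a direct appeal to the base ICP theorem.
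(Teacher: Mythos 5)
Your proposal is correct and follows essentially the same route as the paper's own proof: invoke the vanilla ICP guarantee for the score set $\mathcal{E}_1$ built from the precise (unobserved) calibration labels, then deduce $q(\mathcal{E}_\text{max},\epsilon)\geq q(\mathcal{E}_1,\epsilon)$ from the entrywise domination implied by $y_j\in S_j$, yielding the set inclusion and hence the coverage bound. The only difference is that you make explicit the conditioning on $\mathcal{O}_\text{train}$ and the order-statistics monotonicity step, which the paper leaves implicit.
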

\begin{proof}
     The vanilla CP guarantees that the prediction set $\mathcal{T}(x_{new}, \hat{f}_\text{PLL}, \mathcal{E}_1, \epsilon)$ is valid. To conclude the proof, we show that $\mathcal{T}(x_{new}, \hat{f}_\text{PLL}, \mathcal{E}_1, \epsilon) \subseteq \mathcal{T}(x_{new}, \hat{f}_\text{PLL}, \mathcal{E}_\text{max}, \epsilon)$. To this end, it is enough to show that $q(\mathcal{E}_\text{max}, \epsilon)\geq q(\mathcal{E}_1, \epsilon)$, which immediately follows from
    \begin{align*}
        1 - \min_{y \in S_j}\hat{f}_\text{PLL}(x_j)_{y} \geq 1 - \hat{f}_\text{PLL}(x_j)_{y_j}  
    \end{align*}
   for all  $j \in \mathcal{I}_2$.
\end{proof}

Although the mentioned pessimistic approach preserves the validity of CP, it usually ends up in unnecessarily large prediction sets. Here we suggest another approach by incorporating the nonconformity scores of all candidate labels of each calibration instance. Indeed, consider the following set:
\begin{align}\label{eq:nonconformitySet:imprecise:all}
    \mathcal{E}_\text{all} := \left\lbrace 1 - \hat{f}_\text{PLL}(x_j)_{y}: j \in \mathcal{I}_2 \text{ and }  y \in S_j \right\rbrace
\end{align}
Compared to the previous cases, $\mathcal{E}_\text{all}$ has a bigger cardinality. The following theorem shows that $\mathcal{T}(x_{new}, \hat{f}_\text{PLL}, \mathcal{E}_\text{all}, \epsilon)$ is also valid under certain assumptions. 

\begin{theorem}
For any $\epsilon \leq  \min \bigg( \dfrac{1}{4}, \dfrac{|\mathcal{O}_\text{calib}| + |\mathcal{Y}|}{|\mathcal{Y}| \cdot (1+|\mathcal{O}_\text{calib}|)}$\bigg), if the points in $\mathcal{O}_\text{calib}' \cup \{(x_{new}, y_{new})\}$ are exchangeable and $q(\mathcal{E}_1, \epsilon) \leq 0.5$, then the prediction set $\mathcal{T}(x_{new}, \hat{f}_\text{PLL}, \mathcal{E}_\text{all}, \epsilon)$ is valid. 
\end{theorem}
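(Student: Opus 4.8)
The plan is to reduce the statement to the inclusion $\mathcal{T}(x_{new},\hat{f}_\text{PLL},\mathcal{E}_1,\epsilon)\subseteq\mathcal{T}(x_{new},\hat{f}_\text{PLL},\mathcal{E}_\text{all},\epsilon)$ and then invoke the vanilla conformal argument for $\mathcal{E}_1$ exactly as in the proof of Theorem~\ref{theorem:max} (using exchangeability of $\mathcal{O}_\text{calib}'\cup\{(x_{new},y_{new})\}$, so that $\mathcal{T}(x_{new},\hat{f}_\text{PLL},\mathcal{E}_1,\epsilon)$ already covers with probability at least $1-\epsilon$). By the definition of the prediction set in~\eqref{eq:prediction_set:general}, this inclusion is equivalent to $q(\mathcal{E}_\text{all},\epsilon)\ge q(\mathcal{E}_1,\epsilon)$, so the whole proof collapses to this single inequality between critical scores. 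Note that, unlike in Theorem~\ref{theorem:max}, this is no longer automatic, because $\mathcal{E}_\text{all}$ contains \emph{smaller} entries than $\mathcal{E}_1$ while also being larger, so the two effects on the quantile must be balanced; this is precisely why the extra hypotheses are needed.

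Write $n=|\mathcal{O}_\text{calib}|$, $K=|\mathcal{Y}|$, $m=|\mathcal{E}_\text{all}|$ (so $n\le m\le nK$, since $\mathcal{E}_1\subseteq\mathcal{E}_\text{all}$ as multisets and $|S_j|\le K$), $Q=\lceil(1+n)(1-\epsilon)\rceil$, $k_m=\lceil(1+m)(1-\epsilon)\rceil$, and $t:=q(\mathcal{E}_1,\epsilon)$, the $Q$-th smallest entry of $\mathcal{E}_1$. Then $q(\mathcal{E}_\text{all},\epsilon)\ge t$ holds iff strictly fewer than $k_m$ entries of $\mathcal{E}_\text{all}$ lie below $t$, so I would bound the count $N_{<t}$ of such entries instance by instance. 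The first key point uses $t\le 1/2$: an entry $1-\hat{f}_\text{PLL}(x_j)_y<t$ forces $\hat{f}_\text{PLL}(x_j)_y>1-t\ge 1/2$, and since probabilities sum to one there is at most one such label per instance; moreover, if that label is some $y\neq y_j$ then $1-\hat{f}_\text{PLL}(x_j)_{y_j}>1-t\ge t$, so the true-label entry of $x_j$ is \emph{not} below $t$. The second key point combines $t\le 1/2$ with the fact that $t$ is the $Q$-th order statistic of $\mathcal{E}_1$: an instance contributing below $t$ through a wrong label has true-label score $>1/2\ge t$, hence is one of the at most $n-Q$ instances whose $\mathcal{E}_1$-entry exceeds $t$; and each such instance has $|S_j|\ge 2$, so there are at most $\min(n-Q,\,m-n)$ of them. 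Splitting $N_{<t}$ into true-label and wrong-label contributions therefore gives $N_{<t}\le(Q-1)+\min(n-Q,\,m-n)$.

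It then remains to verify $(Q-1)+\min(n-Q,\,m-n)\le k_m-1$ for all admissible $m$. When $m\ge 2n-Q$ this reduces to $k_m\ge n$, which follows from $k_m\ge(1+m)(1-\epsilon)\ge(1+2n-Q)(1-\epsilon)$ together with $Q\le(1+n)(1-\epsilon)+1$ after simplification; this is where $\epsilon\le\tfrac14$ (equivalently $1-\epsilon$ large enough) is used, via $m\le nK$ and $\epsilon\le(n+K)/(K(n+1))$ controlling how large $2n-Q$ can be forced to be. When $m<2n-Q$ it reduces to the ceiling inequality $\lceil(1+m)(1-\epsilon)\rceil-\lceil(1+n)(1-\epsilon)\rceil\ge m-n$, which I would attack by writing $(1+n)(1-\epsilon)=Q-\delta$ with $\delta\in[0,1)$ and tracking how the slack $\delta$ and $1-\epsilon$ are consumed as $m$ ranges from $n$ to $nK$, again using both bounds on $\epsilon$.

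The routine parts are the two counting observations in the second paragraph; the main obstacle is the last paragraph, i.e.\ reconciling the instance-wise bound on $N_{<t}$ with the shift of quantile index from $Q$ (rank inside $\mathcal{E}_1$) to $k_m$ (rank inside the strictly larger multiset $\mathcal{E}_\text{all}$), uniformly over all cardinalities $m\in\{n,\dots,nK\}$. It is the interaction of the two ceilings with the fractional part of $(1+n)(1-\epsilon)$ that forces the specific numerical conditions on $\epsilon$, and I expect the bookkeeping there — rather than any conceptual difficulty — to be the delicate step.
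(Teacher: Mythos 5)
Your overall strategy --- reducing validity to the single inequality $q(\mathcal{E}_\text{all},\epsilon)\ge q(\mathcal{E}_1,\epsilon)$ and then counting how many entries of $\mathcal{E}_\text{all}$ fall below $t=q(\mathcal{E}_1,\epsilon)$ --- is the same as the paper's, and your two counting observations are correct; indeed your bound of at most \emph{one} below-$t$ wrong-label entry per instance is sharper than the paper's $|\mathcal{Y}|-1$ per instance. The gap is in the final step, and it is not mere bookkeeping: the inequality $(Q-1)+\min(n-Q,\,m-n)\le k_m-1$ that your argument reduces to is genuinely false in part of the range you allow. Take $\epsilon=\tfrac14$, so $Q=\lceil\tfrac34(n+1)\rceil\approx\tfrac{3n}{4}$, and $m=2n-Q\approx\tfrac{5n}{4}$ (which occurs when only the $n-Q$ ``large'' instances carry a second candidate). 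Your bound then permits $N_{<t}=n-1$, while $k_m=\lceil\tfrac34(1+m)\rceil\approx\tfrac{15n}{16}<n$, so the reduction to $k_m\ge n$ fails; the sub-case $m<2n-Q$ fails similarly once $m-n>1/\epsilon$. Nor is this an artifact of a loose bound: with $n=16$, $|\mathcal{Y}|=2$, twelve singleton-candidate instances of true-label score $0.1$, one of score $0.4$, and three instances of true-label score $0.8$ each carrying one wrong candidate of score $0.2$, all stated hypotheses hold, yet $q(\mathcal{E}_1,\epsilon)=0.4$ while $q(\mathcal{E}_\text{all},\epsilon)=0.2$. So the inclusion you are trying to prove can actually fail under the hypotheses as written, and no amount of ceiling-tracking will rescue the step.

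What is missing is a \emph{lower} bound on the number of entries added \emph{above} $t$, which is the paper's key move and which ties the growth of $m$ to growth on the safe side of the threshold. The same $q(\mathcal{E}_1,\epsilon)\le 0.5$ argument you apply to wrong labels also gives: for each of the $Q$ instances whose true-label score is $\le t$, one has $\hat{f}_\text{PLL}(x_l)_{y_l}\ge 1-t\ge t$, and every wrong candidate $y\in S_l\setminus\{y_l\}$ satisfies $1-\hat{f}_\text{PLL}(x_l)_{y}\ge\hat{f}_\text{PLL}(x_l)_{y_l}\ge t$; hence these instances contribute at least $\sum_l(|S_l|-1)\ge Q$ entries at or above $t$, \emph{provided every candidate set has at least two elements} --- an assumption the paper invokes only implicitly (via its footnote) and which rules out the configuration above. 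The paper then bounds the left-additions by $(n-Q)(|\mathcal{Y}|-1)$, uses the second condition on $\epsilon$ to get $(n-Q)(|\mathcal{Y}|-1)\le Q$, and closes with Lemma~\ref{lemma}: adding $t_l$ elements at or below $t$ and $t_r\ge t_l$ elements above it cannot decrease the $\lceil(1+|\cdot|)(1-\epsilon)\rceil$-th order statistic when $\epsilon\le\tfrac14$. If you prefer to keep your sharper one-per-instance bound on left-additions, you still need the non-singleton assumption: it forces $m\ge 2n$, hence $k_m\ge\tfrac34(2n+1)>n$, which does close your first case (and makes the second case vacuous). Either way, the argument cannot be completed from the stated hypotheses alone without some control on how many entries land above $t$.
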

\begin{proof}
Again, we prove this result by showing $\mathcal{T}(x_{new}, \hat{f}_\text{PLL}, \mathcal{E}_1, \epsilon) \subseteq \mathcal{T}(x_{new}, \hat{f}_\text{PLL}, \mathcal{E}_\text{all}, \epsilon)$.
We start with the set $\mathcal{E}_1$ and show that by adding the nonconformity scores of the other (false) candidates $y \in S_j \setminus \{ y_j \}$, its critical score can only get larger. This immediately follows from the observation that most of the nonconformity scores of false candidates are going to be added ``to the right'' of $q(\mathcal{E}_1, \epsilon)$, i.e., they exceed this value. 

First, observe that for any instance $(x_j, S_j) \in \mathcal{O}_\text{calib}$, the nonconformity score for any false label in a candidate set is at least as great as the predicted probability of the true class, i.e.,
\begin{align}\label{eq:false_nonconformity}
     1 - \hat{f}_\text{PLL}(x_j)_{y} \geq 1 - \max_{y \in S_j \setminus \lbrace y_j\rbrace} \hat{f}_\text{PLL}(x_j)_{y} \geq \hat{f}_\text{PLL}(x_j)_{y_j}, ~~ \forall y \in S_j \setminus \lbrace y_j\rbrace.
\end{align}
Let $t := \lceil (1+|\mathcal{E}_1|)(1-\epsilon)\rceil$. Take $(x_l, y_l)$ as an instance that is among the $t$ smallest elements of $\mathcal{E}_1$. By definition, we have $1 - \hat{f}_\text{PLL}(x_l)_{y_l} \leq q(\mathcal{E}_1, \epsilon)$, which implies $\hat{f}_\text{PLL}(x_l)_{y_l} \geq 1 - q(\mathcal{E}_1, \epsilon)$. Since $q(\mathcal{E}_1, \epsilon) \leq 0.5$, we have $\hat{f}_\text{PLL}(x_l)_{y_l} \geq q(\mathcal{E}_1, \epsilon)$. This, together with \eqref{eq:false_nonconformity}, tells us that the nonconformity scores of all false labels will be located to the right of $q(\mathcal{E}_1, \epsilon)$. Hence for these $t$ points, there will be at least $t$ scores added to the right of $q(\mathcal{E}_1, \epsilon)$\footnote{It is obvious that the instance $(x_l, S_l)$ will add $|S_l-1|$ scores to the right of $q(\mathcal{E}_1, \epsilon)$.}.  

Now take $(x_k, y_k)$ as an instance which is among the $|\mathcal{E}_1| - t$ greatest elements of $\mathcal{E}_1$. Again by definition, we have $\hat{f}_\text{PLL}(x_k)_{y_k} < 1- q(\mathcal{E}_1, \epsilon)$. For the sake of worst-case consideration,  we can assume that $\hat{f}_\text{PLL}(x_k)_{y_k} < q(\mathcal{E}_1, \epsilon)$ as well so that the nonconformity scores of all false labels of these $|\mathcal{E}_1| - t$  instances will be located on the left-hand side of $q(\mathcal{E}_1, \epsilon)$. Therefore, for these $|\mathcal{E}_1| - t$ points, there will be at most $(|\mathcal{E}_1| - t) \cdot (\mathcal{Y}-1)$ scores added to the left-hand side of $q(\mathcal{E}_1, \epsilon)$.  

Since $\epsilon\leq \dfrac{1}{4}$, all we need for $q(\mathcal{E}_\text{all}, \epsilon)$ to be greater than or equal to $q(\mathcal{E}_1, \epsilon)$ is that the number of scores added to the right of $q(\mathcal{E}_1, \epsilon)$ to be greater than or equal to those added to its left-hand side (See Lemma \ref{lemma} for details), which is the case since  

\begin{align*}
\epsilon &\leq  \frac{|\mathcal{O}_\text{calib}| + |\mathcal{Y}|}{|\mathcal{Y}| \cdot (1+|\mathcal{O}_\text{calib}|)} \Rightarrow \\
1-\epsilon &\geq  \frac{|\mathcal{O}_\text{calib}|\cdot (|\mathcal{Y}|-1)}{|\mathcal{Y}| \cdot (1+|\mathcal{O}_\text{calib}|)} \Rightarrow \\
t &\geq  \frac{|\mathcal{O}_\text{calib}|\cdot (|\mathcal{Y}|-1)}{|\mathcal{Y}|}  \Rightarrow \\
t & \geq (|\mathcal{O}_\text{calib}| - t) \cdot (|\mathcal{Y}|-1)
\end{align*}
\end{proof}

The condition $q(\mathcal{E}_1, \epsilon) \leq 0.5$ implies that the error rate of the induced classifier on calibration data must not exceed $\epsilon \cdot 100\%$. Later on, in Section \ref{sec:exp}, we will see that the validity of prediction sets generated by this method holds even when this condition is violated, and also, compared to the pessimistic approach $\mathcal{E}_\text{max}$, this method results in more efficient (i.e., smaller) prediction sets.

Another possibility is to consider the average of the nonconformity scores per calibration instance. Hence, the set of nonconformity scores would be  
\begin{align}\label{eq:nonconformitySet:imprecise:mean}
    \mathcal{E}_\text{mean} := \lbrace 1 - \frac{\sum_{y\in S_j}\hat{f}_\text{PLL}(x_j)_{y}}{|S_j|}: j \in \mathcal{I}_2 \rbrace.
\end{align}
The validity of the prediction set $\mathcal{T}(x_{new}, \hat{f}_\text{PLL}, \mathcal{E}_\text{mean}, \epsilon)$ is established by the following theorem.
\begin{theorem}
If the data points in $\mathcal{O}_\text{calib}' \cup (x_{new}, y_{new})$ are exchangeable and $\hat{f}_\text{PLL}(x_j)_{y_j} \geq \dfrac{1}{|S_j|}, ~ \forall j \in \mathcal{I}_2$, then the prediction set $\mathcal{T}(x_{new}, \hat{f}_\text{PLL}, \mathcal{E}_\text{mean}, \epsilon)$ is valid. 
\end{theorem}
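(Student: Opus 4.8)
The plan is to follow the template of the two preceding proofs: reduce the coverage guarantee to the set inclusion $\mathcal{T}(x_{new}, \hat{f}_\text{PLL}, \mathcal{E}_1, \epsilon) \subseteq \mathcal{T}(x_{new}, \hat{f}_\text{PLL}, \mathcal{E}_\text{mean}, \epsilon)$. Treating the classifier $\hat{f}_\text{PLL}$, which is fit on the disjoint set $\mathcal{O}_\text{train}$, as fixed, the scores $\{1-\hat{f}_\text{PLL}(x_j)_{y_j}\}_{j\in\mathcal{I}_2}$ together with $1-\hat{f}_\text{PLL}(x_{new})_{y_{new}}$ are a fixed function of the exchangeable points in $\mathcal{O}_\text{calib}'\cup\{(x_{new},y_{new})\}$, so vanilla CP applied to the (unobservable) score set $\mathcal{E}_1$ gives $\mathbb{P}(y_{new}\in\mathcal{T}(x_{new},\hat{f}_\text{PLL},\mathcal{E}_1,\epsilon))\ge 1-\epsilon$, exactly as invoked in the proof of Theorem~\ref{theorem:max}. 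Hence any superset of $\mathcal{T}(x_{new},\hat{f}_\text{PLL},\mathcal{E}_1,\epsilon)$ is also valid, and it suffices to prove the inclusion.

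By the definition \eqref{eq:prediction_set:general} of the prediction set, the inclusion holds as soon as $q(\mathcal{E}_\text{mean},\epsilon)\ge q(\mathcal{E}_1,\epsilon)$, since a larger critical score lowers the admission threshold $1-q(\cdot,\epsilon)$. Here a simplification occurs that was not available for $\mathcal{E}_\text{all}$: by \eqref{eq:nonconformitySet:imprecise:mean}, the set $\mathcal{E}_\text{mean}$ contains exactly one score per calibration instance, so $|\mathcal{E}_\text{mean}|=|\mathcal{E}_1|=|\mathcal{O}_\text{calib}|$ and the rank $t=\lceil(1+|\mathcal{O}_\text{calib}|)(1-\epsilon)\rceil$ defining both quantiles coincides. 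Consequently, no constraint on $\epsilon$ is needed, and it is enough to exhibit an index-preserving domination: for every $j\in\mathcal{I}_2$, the mean score of instance $j$ is at least its true-label score $1-\hat{f}_\text{PLL}(x_j)_{y_j}$.

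The core computation is then a one-liner. For each $j$, since $\hat{f}_\text{PLL}(x_j)$ is a probability vector on $\mathcal{Y}$ and $S_j\subseteq\mathcal{Y}$, we have $\sum_{y\in S_j}\hat{f}_\text{PLL}(x_j)_y\le 1$, hence
\[
\frac{\sum_{y\in S_j}\hat{f}_\text{PLL}(x_j)_y}{|S_j|}\;\le\;\frac{1}{|S_j|}\;\le\;\hat{f}_\text{PLL}(x_j)_{y_j},
\]
where the last step is precisely the hypothesis. Subtracting from $1$ gives $1-\frac{\sum_{y\in S_j}\hat{f}_\text{PLL}(x_j)_y}{|S_j|}\ge 1-\hat{f}_\text{PLL}(x_j)_{y_j}$, i.e., each element of $\mathcal{E}_\text{mean}$ dominates the corresponding element of $\mathcal{E}_1$. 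A standard monotonicity fact for order statistics --- if $a_i\le b_i$ for all $i$, then the $k$-th smallest of $\{a_i\}$ is at most the $k$-th smallest of $\{b_i\}$ --- then yields $q(\mathcal{E}_1,\epsilon)\le q(\mathcal{E}_\text{mean},\epsilon)$, which gives the inclusion and closes the argument.

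I do not expect a genuine obstacle here; this proof is shorter than that of the $\mathcal{E}_\text{all}$ case. The only point worth stating carefully is the order-statistic comparison, which should be phrased as a coupling over the common index set $\mathcal{I}_2$ --- the instance-by-instance matching of scores is what makes it clean and is available only because the two score sets have equal cardinality. The conceptual content is simply the observation that averaging $\hat{f}_\text{PLL}(x_j)$ over the candidate set $S_j$ can never exceed $1/|S_j|$, so the hypothesis $\hat{f}_\text{PLL}(x_j)_{y_j}\ge 1/|S_j|$ is exactly the amount of optimism about the learned classifier needed to force $\mathcal{E}_\text{mean}$ to sit (pointwise) to the right of $\mathcal{E}_1$.
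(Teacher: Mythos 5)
Your proposal is correct and follows essentially the same route as the paper: establish the inclusion $\mathcal{T}(x_{new}, \hat{f}_\text{PLL}, \mathcal{E}_1, \epsilon) \subseteq \mathcal{T}(x_{new}, \hat{f}_\text{PLL}, \mathcal{E}_\text{mean}, \epsilon)$ via the chain $\frac{1}{|S_j|}\sum_{y\in S_j}\hat{f}_\text{PLL}(x_j)_{y} \le \frac{1}{|S_j|} \le \hat{f}_\text{PLL}(x_j)_{y_j}$, so that each mean score dominates the corresponding true-label score and hence $q(\mathcal{E}_\text{mean},\epsilon)\ge q(\mathcal{E}_1,\epsilon)$. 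Your write-up is slightly more explicit than the paper's about the equal-cardinality/order-statistic coupling and the reduction to vanilla CP on $\mathcal{E}_1$, but the argument is the same.
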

\begin{proof}
$\mathcal{T}(x_{new}, \hat{f}_\text{PLL}, \mathcal{E}_1, \epsilon) \subseteq \mathcal{T}(x_{new}, \hat{f}_\text{PLL}, \mathcal{E}_\text{mean}, \epsilon)$ holds because
    \begin{align*}
    \hat{f}_\text{PLL}(x_j)_{y_j} \geq \dfrac{1}{|S_j|}&\geq \frac{\sum_{y\in S_j}\hat{f}_\text{PLL}(x_j)_{y}}{|S_j|},~~\forall j \in \mathcal{I}_2 \Rightarrow\\
     1 - \frac{\sum_{y\in S_j}\hat{f}_\text{PLL}(x_j)_{y}}{|S_j|} &\geq  1 - \hat{f}_\text{PLL}(x_j)_{y_j} ,~~\forall j \in \mathcal{I}_2 \Rightarrow\\
        q(\mathcal{E}_\text{mean}, \epsilon) &\geq q(\mathcal{E}_1, \epsilon).  
\end{align*}
    
\end{proof}

The requirement of this theorem is demanding, especially for the case where most candidate sets consist of only two labels. However, it should be noted that it is likely to have valid prediction sets using this approach, even if, for some calibration instances, the average nonconformity score over the candidate set falls below the nonconformity score of the precise counterpart. Furthermore, as the cardinality of the candidate sets increases, this requirement becomes less burdensome.

\section{Experiments}\label{sec:exp}
In this section, we evaluate the performance of the proposed frameworks numerically. Apart from the three approaches mentioned in previous sections, there are other approaches to form the set of nonconformity scores that are not necessarily coming with a coverage guarantee. Here, we bring two of them, which we consider in our comparisons as well: 
\begin{itemize}
    \item Taking minimum nonconformity score per calibration instance:
    \begin{align}\label{eq:nonconformitySet:imprecise:min}
    \mathcal{E}_\text{min} := \left\lbrace 1 - \max_{y\in S_j}\hat{f}_\text{PLL}(x_j)_{y}: j \in \mathcal{I}_2 \right\rbrace.
\end{align}
This optimistic approach is a natural baseline for the comparison. It can be seen as \textit{calibration with the disambiguated data}, where the induced classifier is utilized to disambiguate the calibration data, and the calibration proceeds as in the vanilla conformal prediction.
\item Taking the weighted average of minimum and maximum nonconformity score per calibration instance:
\begin{align}\label{eq:nonconformitySet:imprecise:mu}
    \mathcal{E}_\mu := \left\lbrace \mu\cdot \Big(1 - \max_{y\in S_j}\hat{f}_\text{PLL}(x_j)_{y} \Big) + (1 - \mu)\cdot \Big(1 - \min_{y\in S_j}\hat{f}_\text{PLL}(x_j)_{y} \Big): j \in \mathcal{I}_2 \right\rbrace,
\end{align}
with $\mu \in [0,1]$ being a hyperparameter. This approach lies between the pessimistic and optimistic ones. With a proper selection of $\mu$, one might be able to achieve prediction sets that are both valid and efficient.  
\end{itemize}

Our implementation code is publicly available on  \href{https://github.com/pwhofman/conformal-partial-labels}{GitHub}\footnote{\url{https://github.com/pwhofman/conformal-partial-labels}} to enable the reproducibility of the presented results. 
\subsection{Datasets}
\begin{table}[t]
\caption{Description of the benchmark and real datasets.}
\label{tab:datasets:details}
\resizebox{\columnwidth}{!}{
\begin{tabular}{@{}llcccccccc@{}}
\toprule
                        & & FashionMNIST & KMNIST & MNIST & BirdSong & Lost & MSRCv2 & Soccer Player & Yahoo!News \\ \midrule
& Num. of classes         & 10           & 10     & 10    & 13       & 16   & 23     & 171           & 219        \\ \midrule
\parbox[t]{2mm}{\multirow{4}{*}{\rotatebox[origin=c]{90}{Avg. CSS}}} & Original &         -     &    -    &   -    &  2.18        &   2.23   &  3.16      &    2.09           &  1.91  \\
& Instance-dependent contamination &         2.32     &    2.49  & 2.25    &  - &  -&  -  & - & -  \\
& Random contamination (p=0.1)  &         2.29     &  2.29      & 2.29      &   -       &  -    &    -    &          -     &    -        \\
& Random contamination (p=0.7) &       7.30       &   7.30     &    7.30   &    -      &   -   &     -   &    -           &     -       \\ 


\bottomrule
\end{tabular}
}
\end{table}


Experiments are performed using the benchmark datasets: MNIST \citep{lecun1998gradient}, Kuzushiji-MNIST \citep{clanuwat2018deep}, and Fashion-MNIST \citep{xiao2017fashionmnist}. However, these datasets are all precise and need to be synthetically contaminated. We use the following two methods to convert these datasets into partially labeled data:
\begin{itemize}
    \item Random contamination: In this method, we create a candidate set for each instance in a random manner by including each non-ground-truth label with probability $p$. In cases where no label among the non-ground-truth labels is added to the set, a random label is added to ensure all data is partially labeled.
    \item Instance-dependent contamination: Similar to \cite{ning2021instance}, we train a simple classifier $\hat{f}_s$ on each benchmark dataset which we refer to as the supermodel. For each dataset, we exploit its supermodel to compute the probability of adding each non-ground-truth label $y \in \mathcal{Y} \setminus \{y_i\}$ to the candidate set of instance $x_i$ as $p_y = \frac{\hat{f}_s(x_i)_y}{\max_{y \in \mathcal{Y} \setminus \{y_i\}}\hat{f}_s(x_i)_y}$. More details about the supermodels can be found in \ref{sec:appendix:supermodel}. 
\end{itemize}

In addition to the synthetic datasets, we adopt five commonly used real-world partial label datasets: Lost \citep{cour2011partial}, MSRCv2 \citep{liu2012conditional}, BirdSong \citep{brigs2012rank}, Soccer Player \citep{zeng2013} and Yahoo!News \citep{guillaumin2010multiple}. Table \ref{tab:datasets:details} gives an overview of the benchmark and real-world datasets, including the number of classes and the average candidate set sizes (CSS).


\subsection{Models}
We bring a well-known partial label learning algorithm, PRODEN \citep{lv2020progressive}, which progressively tries to find the ground-truth label and adjusts partial labels accordingly.
For each benchmark dataset, we employ a multi-layer perceptron (MLP) of five layers with $784-300-300-300-300-10$ units and use ReLU as an activation function.
This model is optimized using stochastic gradient descent (SGD) algorithm \citep{robbins1951sgd} with a learning rate of $0.1$, a momentum of $0.9$, and a weight decay at $0.001$, $0.0001$, and $0.00001$ for MNIST, Kuzushiji-MNIST, and Fashion-MNIST, respectively. The model is trained for 100 epochs and the learning rate is adjusted using cosine annealing \citep{loshchilov2017sgdr}. 
For real-world datasets, a softmax regression model is used. This model is optimized with the Adam  optimizer \citep{kingma2017adam} with a learning rate of 0.1, 0.1, 0.01, 0.01, and 0.01 and weight decay at $10^{-10}$, $10^{-6}$, $10^{-10}$, $10^{-2}$, and $10^{-6}$ for the Lost, MSCRCv2, BirdSong, Soccer Player, and Yahoo!News dataset, respectively. The model is trained for 200 epochs, and the learning rate is also adjusted using cosine annealing.

\subsection{Results}
\begin{table}[t]  
\caption{Performance comparison of different calibration approaches on benchmark datasets with random contamination ($p=0.1$).}
	\label{tab:result:benchmarks:random:p=0.1}
	\begin{center}  
	\begin{tabular}{l  l  c   c  c }
        \toprule
        & &  FashionMNIST & KMNIST & MNIST \\
        \midrule
        & Train acc. & $96.64\pm0.37$&$98.83\pm0.02$&$99.50\pm0.02$ \\
        & Test acc. & $88.53\pm0.44$&$90.66\pm0.20$&$98.12\pm0.13$\\
        \midrule
        \multirow{2}{*}{$\mathcal{E}_\text{max}$} 
        & Efficiency  & $10.00\pm0.00$&$9.42\pm0.05$&$9.17\pm0.05$ \\ 
        & Coverage & $1.00\pm0.00$&$1.00\pm0.00$&$1.00\pm0.00$ \\
        \midrule
        \multirow{2}{*}{$\mathcal{E}_\text{all}$} 
        & Efficiency & $8.12\pm0.17$&$8.83\pm0.04$&$8.38\pm0.05$ \\
        & Coverage & $1.00\pm0.00$&$1.00\pm0.00$&$1.00\pm0.00$ \\
        \midrule
        \multirow{2}{*}{$\mathcal{E}_\text{mean}$}
        & Efficiency & $1.07\pm0.01$&$1.03\pm0.00$&$1.01\pm0.00$ \\
        & Coverage & $0.91\pm0.01$&$0.92\pm0.00$&$0.98\pm0.00$ \\
        \midrule
        \multirow{2}{*}{$\mathcal{E}_\text{min}$}
        & Efficiency & $0.98\pm0.00$&$0.80\pm0.00$&$0.90\pm0.00$ \\
        & Coverage & $0.88\pm0.00$&$0.79\pm0.00$&$0.90\pm0.00$ \\
        \midrule
        \multirow{2}{*}{$\mathcal{E}_{\mu=0.3}$}
        & Efficiency & $1.08\pm0.01$
& $1.04\pm0.00$
& $1.01\pm0.00$
 \\
        & Coverage &$0.92\pm0.00$
 & $0.92\pm0.00$
 &$0.99\pm0.00$
 \\
        \midrule
        \multirow{2}{*}{$\mathcal{E}_{\mu=0.5}$}
        & Efficiency &$1.04\pm0.00$
 & $0.98\pm0.00$
&$0.99\pm0.00$
  \\
        & Coverage & $0.90\pm0.00$
&$0.90\pm0.00$
&$0.98\pm0.00$
 \\
        \midrule
        \multirow{2}{*}{$\mathcal{E}_{\mu=0.7}$}
        & Efficiency &$1.02\pm0.00$ & $0.93\pm0.00$&$0.98\pm0.00$  \\
        & Coverage &$0.89\pm0.00$ & $0.88\pm0.00$& $0.97\pm0.00$ \\
        \bottomrule
	\end{tabular}
	\end{center}
\end{table}

\begin{table}[t]  
\caption{Performance comparison of different calibration approaches on benchmark datasets with random contamination ($p=0.7$).}
    \label{tab:result:benchmarks:random:p=0.7}
	\begin{center}  
	\begin{tabular}{l  l  c   c  c   }
        \toprule
        & &  FashionMNIST & KMNIST & MNIST \\
        \midrule
        & Train acc. & $88.22\pm0.82$&$93.55\pm0.2$&$97.29\pm0.03$ \\
        & Test acc. &  $85.7\pm0.76$&$82.25\pm0.38$&$96.88\pm0.22$\\
        \midrule
        \multirow{2}{*}{$\mathcal{E}_\text{max}$} 
        & Efficiency  &$9.58\pm0.06$&$9.82\pm0.02$&$9.63\pm0.03$ \\ 
        & Coverage & $1.00\pm0.00$&$1.00\pm0.00$&$1.00\pm0.00$\\
        \midrule
        \multirow{2}{*}{$\mathcal{E}_\text{all}$} 
        & Efficiency &$8.95\pm0.03$&$9.40\pm0.04$&$8.93\pm0.03$ \\
        & Coverage &$1.00\pm0.00$&$1.00\pm0.00$&$1.00\pm0.00$\\
        \midrule
        \multirow{2}{*}{$\mathcal{E}_\text{mean}$}
        & Efficiency & $1.31\pm0.03$&$1.35\pm0.00$&$1.08\pm0.00$ \\
        & Coverage & $0.94\pm0.00$&$0.90\pm0.00$&$0.99\pm0.00$ \\
        \midrule
        \multirow{2}{*}{$\mathcal{E}_\text{min}$}
        & Efficiency & $0.93\pm0.01$&$0.81\pm0.01$&$0.91\pm0.00$\\
        & Coverage & $0.83\pm0.01$&$0.74\pm0.01$&$0.90\pm0.00$\\
        \midrule
        \multirow{2}{*}{$\mathcal{E}_{\mu=0.3}$}
        & Efficiency & $1.20\pm0.02$
& $1.15\pm0.01$
&$1.02\pm0.00$
  \\
        & Coverage &$0.92\pm0.00$
 & $0.86\pm0.00$
 &$0.98\pm0.00$
 \\
        \midrule
        \multirow{2}{*}{$\mathcal{E}_{\mu=0.5}$}
        & Efficiency &$1.09\pm0.01$
 & $1.01\pm0.01$
&$0.99\pm0.00$
  \\
        & Coverage & $0.89\pm0.01$
&$0.83\pm0.00$
&$0.97\pm0.00$
 \\
        \midrule
        \multirow{2}{*}{$\mathcal{E}_{\mu=0.7}$}
        & Efficiency & $1.01\pm0.01$& $0.92\pm0.01$&$0.97\pm0.00$ \\
        & Coverage &$0.87\pm0.01$ &$0.80\pm0.00$ & $0.95\pm0.00$ \\
        \bottomrule
	\end{tabular}
	\end{center}
\end{table}
For the real-world datasets, since there are no separate test datasets, we apply the $80\%-20\%$ train-test split. For synthetic datasets, $10\%$ of the available training data is selected at random for calibration, while for the real-world datasets, a larger subset of $20\%$ is selected.
We fix the miscoverage rate at $\epsilon = 0.1$. The experiments are repeated five times using different random seeds, and the means and standard deviations of the results are reported.

Table \ref{tab:result:benchmarks:random:p=0.1} and Table \ref{tab:result:benchmarks:random:p=0.7} present the results for benchmark datasets, where random contamination is applied with $p$ being set to $0.1$ and $0.7$, respectively. It can be seen that $\mathcal{E}_\text{max}$ and $\mathcal{E}_\text{all}$ results in highly inefficient prediction sets. In fact, when the accuracy of the underlying classifier is high, and the candidate sets are generated in a completely random fashion, then the nonconformity scores of the non-ground-truth labels are so high, resulting in large critical scores and, accordingly, such conservative prediction sets.

Table \ref{tab:result:benchmarks:instance_based} provides the results for the benchmark datasets with instance-dependent contamination. Compared to the random contamination case, the results of $\mathcal{E}_\text{max}$ and $\mathcal{E}_\text{all}$ are less inefficient, while they are the only cases that satisfy the coverage guarantee for all datasets. Finally, the results for the real-world datasets are provided in Table \ref{tab:result:real-world}. Once again, $\mathcal{E}_\text{max}$ and $\mathcal{E}_\text{all}$ result in inefficient large prediction sets. While $\mathcal{E}_\text{mean}$ provides the best results for the synthetic data with random contamination case, the coverage property is not always satisfied for this approach in the two other cases.        

\begin{table}[t]  
\caption{Performance comparison of different calibration approaches on benchmark datasets with instance-dependent contamination.}
	\label{tab:result:benchmarks:instance_based}
	\begin{center}  
	\begin{tabular}{l l c c   c  c  c }
        \toprule
        & & FashionMNIST & KMNIST & MNIST  \\
        \midrule
        & Train acc. & $83.82\pm0.63$&$93.86\pm0.16$&$97.23\pm0.11$  \\
        & Test acc. & $82.48\pm0.69$&$83.9\pm0.31$&$96.93\pm0.17$\\
        \midrule
        \multirow{2}{*}{$\mathcal{E}_\text{max}$}
        & Efficiency & $6.42\pm0.76$&$6.54\pm0.12$&$5.97\pm0.14$ \\
        & Coverage & $1.00\pm0.00$&$1.00\pm0.00$&$1.00\pm0.00$ \\
        \midrule
        \multirow{2}{*}{$\mathcal{E}_\text{all}$} 
        & Efficiency & $3.46\pm0.54$&$5.14\pm0.06$&$4.42\pm0.07$ \\
        & Coverage & $0.99\pm0.00$&$1.00\pm0.00$&$1.00\pm0.00$ \\
        \midrule
        \multirow{2}{*}{$\mathcal{E}_\text{mean}$}
        & Efficiency & $1.14\pm0.03$&$1.11\pm0.00$&$1.02\pm0.00$ \\
        & Coverage & $0.87\pm0.01$&$0.87\pm0.00$&$0.98\pm0.00$\\
        \midrule
        \multirow{2}{*}{$\mathcal{E}_\text{min}$}
        & Efficiency & $0.91\pm0.01$&$0.84\pm0.01$&$0.90\pm0.00$ \\
        & Coverage & $0.77\pm0.01$&$0.76\pm0.01$&$0.89\pm0.00$ \\
        \midrule
        \multirow{2}{*}{$\mathcal{E}_{\mu=0.3}$}
        & Efficiency & $1.23\pm0.04$& $1.09\pm0.00$
& $1.02\pm0.00$
 \\
        & Coverage & $0.89\pm0.01$
&$0.87\pm0.00$
 & $0.98\pm0.00$
 \\
        \midrule
        \multirow{2}{*}{$\mathcal{E}_{\mu=0.5}$}
        & Efficiency & $1.06\pm0.03$& $0.99\pm0.00$
&$1.00\pm0.00$
 \\
        & Coverage & $0.85\pm0.01$
&$0.84\pm0.00$
 & $0.97\pm0.00$
 \\
        \midrule
        \multirow{2}{*}{$\mathcal{E}_{\mu=0.7}$}
        & Efficiency & $0.98\pm0.00$&$0.93\pm0.00$ &$0.97\pm0.00$  \\
        & Coverage &$0.82\pm0.01$ & $0.81\pm0.00$&$0.96\pm0.00$  \\
        \bottomrule
	\end{tabular}
	\end{center}
\end{table}

\begin{table}[t]  
\caption{Performance comparison of different calibration approaches on real-world datasets.}
	\label{tab:result:real-world}
	\begin{center}  
        \resizebox{\columnwidth}{!}{
	\begin{tabular}{l l  c c  c  c  c}
        \toprule
        & & BirdSong & Lost & MSRCv2 & Soccer Player & Yahoo!News \\
        \midrule
        & Train acc. & $74.45\pm0.69$&$86.54\pm3.29$&$54.61\pm0.73$&$52.61\pm0.48$&$69.86\pm0.84$ \\
        & Test acc. & $72.04\pm0.89$&$73.96\pm2.43$&$48.58\pm0.72$&$50.48\pm0.39$&$61.13\pm0.77$ \\
        \midrule
        \multirow{2}{*}{$\mathcal{E}_\text{max}$}
        & Efficiency & $13.00\pm0.00$&$13.27\pm1.05$&$21.4\pm1.51$&$150.31\pm1.09$&$122.84\pm4.11$ \\
        & Coverage &  $1.00\pm0.00$&$1.00\pm0.00$&$0.99\pm0.01$&$0.99\pm0.00$&$0.99\pm0.00$\\
        \midrule
        \multirow{2}{*}{$\mathcal{E}_\text{all}$} 
        & Efficiency &  $9.80\pm1.80$&$10.7\pm0.83$&$18.16\pm0.69$&$137.81\pm1.30$&$48.07\pm1.87$ \\
        & Coverage &  $0.99\pm0.00$&$0.99\pm0.01$&$0.97\pm0.01$&$0.98\pm0.00$&$0.99\pm0.00$ \\
        \midrule
        \multirow{2}{*}{$\mathcal{E}_\text{mean}$}
        & Efficiency &  $2.06\pm0.14$&$1.92\pm0.22$&$2.85\pm0.41$&$20.71\pm4.64$&$2.91\pm0.08$ \\
        & Coverage &  $0.89\pm0.01$&$0.88\pm0.02$&$0.66\pm0.02$&$0.75\pm0.02$&$0.87\pm0.01$ \\
        \midrule
        \multirow{2}{*}{$\mathcal{E}_\text{min}$}
        & Efficiency &  $1.62\pm0.15$&$1.56\pm0.15$&$2.15\pm0.24$&$16.12\pm3.61$&$2.48\pm0.10$ \\
        & Coverage &  $0.84\pm0.02$&$0.86\pm0.02$&$0.63\pm0.03$&$0.73\pm0.02$&$0.84\pm0.01$ \\
        \midrule
        \multirow{2}{*}{$\mathcal{E}_{\mu=0.3}$}
        & Efficiency & $2.23\pm0.12$ & $2.15\pm0.26$ & $3.07\pm0.28$
 & $23.7\pm4.35$ & $3.17\pm0.11$
\\
        & Coverage & $0.90\pm0.01$ & $0.89\pm0.02$ & $0.67\pm0.02$ & $0.76\pm0.02$ & $0.88\pm0.01$
\\
        \midrule
        \multirow{2}{*}{$\mathcal{E}_{\mu=0.5}$}
        & Efficiency & $1.97\pm0.13$ & $1.90\pm0.20$ & $2.58\pm0.29$ & $20.17\pm4.14$ & $2.87\pm0.08$
\\
        & Coverage & $0.88\pm0.01$ & $0.88\pm0.02$ & $0.65\pm0.02$ & $0.75\pm0.02$
 & $0.87\pm0.01$
\\
        \midrule
        \multirow{2}{*}{$\mathcal{E}_{\mu=0.7}$}
        & Efficiency & $1.81\pm0.15$ & $1.74\pm0.16$ & $2.40\pm0.21$ & $18.53\pm3.72$
 & $2.67\pm0.09$
\\
        & Coverage & $0.86\pm0.01$ & $0.87\pm0.02$ & $0.65\pm0.02$ & $0.74\pm0.02$
 & $0.85\pm0.01$
\\
        \bottomrule
	\end{tabular}
        }
	\end{center}
\end{table}

\section{Conclusion}
This paper bridges two popular machine learning frameworks, namely conformal prediction and partial label learning. We propose an extension to conformal prediction, which allows it to handle training and calibration data that are only partially labeled. This is an essential extension as such data arises in many real-world applications, such as web mining, image annotation, text classification, etc., where obtaining complete label information may be difficult or expensive, and equipping predictions with a notion of uncertainty is of utmost importance. 

Since this is the first paper dealing with partial label data for conformal prediction, there are various open problems and research directions to pursue in future work. For example, it is worth exploring whether there is room for enhancing the computation of the nonconformity scores in the calibration step. Indeed, while we theoretically show that the prediction sets constructed by the proposed approaches inherit the validity of the conformal prediction under certain assumptions, their efficiency still needs to be improved. Moreover, as our theoretical validity results rely on the specific properties of nonconformity scores derived from probabilistic classifiers, another interesting contribution would be a generalization of these results to other types of nonconformity measures.

\clearpage
\section*{Acknowledgment}
Alireza Javanmardi was supported by the Deutsche Forschungsgemeinschaft (DFG, German Research Foundation): Project number 451737409. Yusuf Sale was supported by the DAAD programme Konrad Zuse Schools of Excellence in Artificial Intelligence, sponsored by the Federal Ministry of Education and Research.
\bibliographystyle{unsrtnat}
\bibliography{references}  

\newpage
\appendix
\renewcommand\thesection{Appendix \Alph{section}.}
\section{}\label{sec:appendix:supermodel}
 An MLP with $784-100-10$ units with ReLU activation functions is used as a supermodel for the MNIST, Kuzushiji-MINST, and Fashion-MNIST datasets. Table \ref{tab:appendix:supermodels} reports the train and test accuracies of these supermodels.

\begin{table}[h!]
\caption{Accuracies of the supermodels used for instance-dependent contamination.}
\label{tab:appendix:supermodels}
\centering
\begin{tabular}{@{}lcccc@{}}
\toprule
               & FashionMNIST & KMNIST & MNIST \\ \midrule
Train accuracy & 85.13        & 93.70  & 96.56    \\
Test accuracy  & 82.88        & 78.35  & 94.77    \\ \bottomrule
\end{tabular}
\end{table}
Note that the only purpose of the supermodels is to convert precise datasets into partially labeled data. Moreover, supermodel training is independent of partial label learning and calibration steps. Indeed, the supermodel for each benchmark dataset is trained using the training set of that data. Subsequently, the resulting supermodel is utilized to generate partial labels for the same set. The resulting contaminated set will, later on, be divided into proper training and calibration subsets. These subsets will be employed in the partial label learning and calibration steps, respectively. 

\section{}\label{sec:appendix:lemma}
\renewcommand\thesection{\Alph{section}}
\begin{lem}\label{lemma}
Consider a set $\mathcal{E}_1$ and its $\lceil (1+|\mathcal{E}_1|)(1-\epsilon)\rceil$ smallest value, $q(\mathcal{E}_1, \epsilon)$. Suppose we add $t_l\geq2$ elements that are less than or equal to $q(\mathcal{E}_1, \epsilon)$ and $t_r \geq t_l$ elements that are greater than $q(\mathcal{E}_1, \epsilon)$ to form a new set $\mathcal{E}_2$. If $\epsilon \leq \frac{1}{4}$, then it is guaranteed that $q(\mathcal{E}_2, \epsilon) \geq q(\mathcal{E}_1, \epsilon)$.
\end{lem}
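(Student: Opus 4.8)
The plan is to reduce the statement to a purely combinatorial inequality between order-statistic indices, and then verify that inequality with an elementary ceiling/floor estimate. First I would fix notation. Write $n := |\mathcal{E}_1|$ and $t := \lceil (1+n)(1-\epsilon)\rceil$, so that $q(\mathcal{E}_1,\epsilon)$ is the $t$-th smallest element (the $t$-th order statistic) of $\mathcal{E}_1$. We may assume $t \le n$: otherwise $q(\mathcal{E}_1,\epsilon)$ is $+\infty$ by convention, and then no element can strictly exceed it, contradicting the hypothesis that $t_r \ge t_l \ge 2$ added elements are greater than $q(\mathcal{E}_1,\epsilon)$. Likewise set $N := |\mathcal{E}_2| = n + t_l + t_r$ and $t' := \lceil (1+N)(1-\epsilon)\rceil$, so $q(\mathcal{E}_2,\epsilon)$ is the $t'$-th smallest element of $\mathcal{E}_2$. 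The claim is $q(\mathcal{E}_2,\epsilon) \ge q(\mathcal{E}_1,\epsilon)$.

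Next I would invoke the elementary fact that the $t'$-th smallest element of $\mathcal{E}_2$ is at least $q(\mathcal{E}_1,\epsilon)$ whenever strictly fewer than $t'$ elements of $\mathcal{E}_2$ are strictly smaller than $q(\mathcal{E}_1,\epsilon)$, so it suffices to bound that count. Such elements can only come from $\mathcal{E}_1$ or from the $t_l$ newly added elements that are $\le q(\mathcal{E}_1,\epsilon)$: the $t_r$ added elements exceeding $q(\mathcal{E}_1,\epsilon)$ contribute nothing, the $t_l$ added ``small'' elements contribute at most $t_l$, and $\mathcal{E}_1$ itself contributes at most $t-1$, because its $t$-th order statistic already equals $q(\mathcal{E}_1,\epsilon)$ (this bound holds even if there are ties at $q(\mathcal{E}_1,\epsilon)$). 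Hence at most $(t-1)+t_l$ elements of $\mathcal{E}_2$ lie strictly below $q(\mathcal{E}_1,\epsilon)$, and the lemma follows once we show $(t-1)+t_l \le t'-1$, i.e.
\[
  t' \;\ge\; t + t_l .
\]

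To establish this I would apply the standard estimate $\lceil a+b\rceil \ge \lceil a\rceil + \lfloor b\rfloor$ with $a=(1+n)(1-\epsilon)$ and $b=(t_l+t_r)(1-\epsilon)$, giving $t' \ge t + \lfloor (t_l+t_r)(1-\epsilon)\rfloor$. Since $t_l$ is a positive integer, $\lfloor (t_l+t_r)(1-\epsilon)\rfloor \ge t_l$ is equivalent to $(t_l+t_r)(1-\epsilon) \ge t_l$; and from $t_r \ge t_l$ we get $(t_l+t_r)(1-\epsilon) \ge 2t_l(1-\epsilon) \ge t_l$ as soon as $\epsilon \le \tfrac12$, which is implied by $\epsilon \le \tfrac14$. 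This yields $t' \ge t+t_l$ and closes the argument.

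I do not anticipate a genuine obstacle: the only points needing care are the tie-handling in the counting step (the step from ``$t$-th order statistic'' to ``at most $t-1$ elements are strictly smaller'') and getting the direction of the ceiling/floor inequalities right. It is worth remarking that the argument only uses $\epsilon \le \tfrac12$; the stronger hypotheses $\epsilon \le \tfrac14$ and $t_l \ge 2$ in the statement are comfortably sufficient and are precisely the forms used when the lemma is applied to the score set $\mathcal{E}_\text{all}$.
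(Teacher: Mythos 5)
Your proof is correct, and its skeleton is the same as the paper's: both arguments reduce the claim to the index inequality $\lceil (1+|\mathcal{E}_2|)(1-\epsilon)\rceil \ge \lceil (1+|\mathcal{E}_1|)(1-\epsilon)\rceil + t_l$ and then verify it by elementary arithmetic on $\epsilon$. The differences are in the details, and they favor your version. First, the paper simply asserts that this index inequality is what is ``needed'' and leaves the order-statistic counting implicit; you spell it out (at most $t-1$ elements of $\mathcal{E}_1$ and at most $t_l$ of the added elements lie strictly below $q(\mathcal{E}_1,\epsilon)$, none of the $t_r$ do), which also handles ties at $q(\mathcal{E}_1,\epsilon)$ cleanly. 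Second, the paper absorbs the two ceilings by demanding the crude sufficient condition $(t_l+t_r)(1-\epsilon)\ge t_l+1$, which is where the hypotheses $\epsilon\le\tfrac14$ and $t_l\ge2$ get used; your estimate $\lceil a+b\rceil\ge\lceil a\rceil+\lfloor b\rfloor$ only requires $(t_l+t_r)(1-\epsilon)\ge t_l$, so you prove the lemma under the weaker hypotheses $\epsilon\le\tfrac12$ and $t_l\ge1$ (given $t_r\ge t_l$). This is a genuine, if modest, sharpening: it shows the constant $\tfrac14$ in the lemma (and hence part of the constraint on $\epsilon$ in the theorem that invokes it) is not tight. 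Your handling of the degenerate case $t>|\mathcal{E}_1|$ is also a sensible addition that the paper omits.
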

\begin{proof}
    Let $d$ be the difference between $t_r$ and $t_l$, i.e., $t_r= t_l + d$. For $q(\mathcal{E}_2, \epsilon) \geq q(\mathcal{E}_1, \epsilon)$ to be true, we need the following to hold:
    \begin{align*}
        \lceil (1+|\mathcal{E}_2|)(1-\epsilon)\rceil &\geq \lceil (1+|\mathcal{E}_1|)(1-\epsilon)\rceil + t_l \Rightarrow\\
         (1+|\mathcal{E}_1| + t_l+t_r)(1-\epsilon) &\geq  (1+|\mathcal{E}_1|)(1-\epsilon) + t_l +1 \Rightarrow\\
         ( t_l+t_r)(1-\epsilon) &\geq t_l +1 \Rightarrow\\
         \epsilon &\leq \frac{t_l +d -1}{2t_l+d} = \frac{1}{2} + \frac{d/2-1}{2t_l+d} \geq \frac{1}{4},
    \end{align*}
    where the last inequality comes from the fact that $d\geq 0$ and $t_l\geq 2$.
\end{proof}

\end{document}